\documentclass{article}

\usepackage{arxiv}

\usepackage[utf8]{inputenc} 
\usepackage[T1]{fontenc}    
\usepackage{hyperref}      
\usepackage{url}            
\usepackage{booktabs}       
\usepackage{amsfonts}       
\usepackage{nicefrac}       
\usepackage{microtype}      
\usepackage{lipsum}		
\usepackage{graphicx}
\usepackage{natbib}
\usepackage{doi}
\usepackage{amsmath}
\usepackage{amssymb}
\usepackage{amsthm}
\usepackage{newtxtext}

\newtheorem{proposition}{Proposition}
\newtheorem{corollary}{Corollary}

\title{Specification Gaming and a Constrained Reinforcement Learning \\ Mitigation in a Cleaning Gridworld}

\date{} 					

\author{ \href{https://orcid.org/0009-0005-4873-7719}{\includegraphics[scale=0.06]{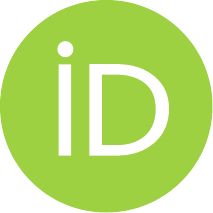}\hspace{1mm}Hilda Adwubi Osei} \\
	Department of Industrial Engineering\\
	Kwame Nkrumah University of Science and Technology\\
	Kumasi, Ghana \\
	\texttt{hildaosei109@gmail.com} \\
}

\renewcommand{\headeright}{Preprint}
\renewcommand{\undertitle}{Preprint}
\renewcommand{\shorttitle}{\textit{arXiv} Template}

\hypersetup{
pdftitle={Specification Gaming and a Constrained Reinforcement Learning Mitigation in a Cleaning Gridworld},
pdfsubject={cs.LG, cs.AI},
pdfauthor={Hilda Adwubi Osei},
pdfkeywords={specification gaming, reward hacking, reinforcement learning, AI alignment, constrained optimization},
}

\begin{document}
\maketitle

\begin{abstract}

Reinforcement learning agents optimize exactly the reward signal they are given, which can diverge sharply from an intended true objective when that signal is only a proxy for it; this failure mode is known as specification gaming. Existing evidence for this failure mode is largely observational, with comparatively few demonstrations pairing a controlled, deliberately constructed proxy/true gap with a mitigation that carries a verifiable guarantee rather than only an empirical improvement. We construct a minimal gridworld in which a cleaning-robot agent can either genuinely remove dirt or conceal it from a cheap sensor, and show that a standard deep Q-network trained only on the sensor's proxy reward converges to a policy that games it completely, reaching near-maximal proxy reward while never achieving the true objective. We then introduce a constrained reinforcement learning fix, penalizing the specific behaviors through which the agent exploits the gap, and show that it eliminates the gaming behavior from the final policy while substantially recovering true-objective performance. We further derive and empirically verify an exact accounting identity relating the proxy reward, true reward, and our cost signal for this environment, showing that the cost signal measures the proxy/true gap exactly rather than merely tracking it, and yields a deterministic guarantee under a hard-constrained variant of the method.

\end{abstract}

\keywords{specification gaming \and reward hacking \and reinforcement learning \and AI alignment \and constrained optimization \and outer alignment}

\renewcommand{\shorttitle}{Specification Gaming and Constrained RL}

\section{Introduction}
\label{sec:intro}

A reinforcement learning agent optimizes exactly the reward signal it is given, not the intention behind it. When that signal is a proxy standing in for a true objective that is harder to specify directly -- a sensor reading in place of a physical state, a classifier score in place of an outcome -- optimization pressure can push the agent toward the proxy/true gap rather than closing it. This failure mode, \emph{specification gaming}, has been documented extensively across both simulated and real-deployed systems \citep{krakovna2020specification}, and recent work has begun to formalize what distinguishes it from ordinary reward misspecification \citep{skalse2022defining}.

Much of the existing evidence for specification gaming is observational: a catalogue of cases in which an agent was found, after the fact, to have exploited a reward function in an unintended way. Comparatively less work pairs a controlled, deliberately constructed proxy/true gap with a mitigation that carries a stated, verifiable guarantee rather than only an empirical improvement. This gap matters for deployment: knowing that a fix tends to work is weaker than knowing precisely what it guarantees and under what conditions.

In this work, we construct a minimal gridworld, \textsc{CleaningGridWorld}, with a genuine and exploitable gap between a proxy reward (what a cheap sensor reports) and a true objective (what was actually accomplished), and show that a standard deep Q-network trained on the proxy alone discovers two independent ways to exploit this gap without being directed toward either. We then introduce a constrained reinforcement learning fix, built around a cost signal targeting the discovered exploits directly, and show that it eliminates the gaming behavior from the agent's final policy while substantially recovering true-objective performance. Critically, we derive an exact accounting identity relating the proxy reward, the true reward, and our cost signal, and verify that it holds exactly against the trained agents, giving the fix a provable guarantee under a hard-constrained variant, not merely an empirical trend.

Concretely, this paper contributes:
\begin{itemize}
	\item \textsc{CleaningGridWorld}, a gridworld with a constructed, interpretable proxy/true reward gap, and an empirical demonstration that a proxy-only DQN discovers two distinct gaming mechanisms unaided.
	\item A Lagrangian-penalized constrained RL fix, with a cost signal targeting the discovered exploits, that eliminates gaming from the final greedy policy while recovering substantial true-objective performance.
	\item An exact accounting identity relating proxy reward, true reward, and cost, proved for this environment and empirically verified against both trained agents -- yielding a deterministic lower bound on true-objective performance under a hard-constrained version of the method.
\end{itemize}

The proxy/true gap studied in this paper is not unique to toy domains. AI-driven biological design and screening tools, such as generative sequence design scored against an automated toxicity or pathogenicity classifier, face a structurally identical gap, where "passes the screen" and "is actually safe" can diverge under optimization pressure. We return to this connection as a motivation for future work in Section~\ref{sec:discussion}, without claiming biosecurity-specific experimental results in this study.

\section{Related Work}
\label{sec:related}

\subsection{Specification Gaming and Reward Hacking}
Specification gaming -- an agent exploiting a literal reward specification in a way that diverges from the designer's intent -- has been documented extensively across both simulated and deployed systems \citep{krakovna2020specification}. These cases are largely catalogued observationally: a system is found, after deployment or training, to have discovered an unintended shortcut. \citet{skalse2022defining} take a step toward formal rigor, characterizing when a reward function admits a hacking policy relative to an intended true objective, distinguishing reward hacking from ordinary reward misspecification more broadly. We adopt this proxy/true framing directly, but construct an environment where the gap is specified by design rather than discovered after the fact, which lets us pair the demonstration with a mitigation whose effect on that specific, known gap can be measured exactly.

\subsection{Reward Shaping and Policy Invariance}
Dense, informative reward signals are often necessary for an RL agent to discover a sparse or delayed objective within a practical training budget. \citet{ng1999policy} show that reward shaping of the form $F(s,a,s') = \gamma\phi(s') - \phi(s)$, for any potential function $\phi$, leaves the optimal policy of the underlying MDP unchanged -- shaping accelerates discovery of an optimum without altering what that optimum is. This guarantee is what allows us to introduce dense positional shaping in Section~\ref{sec:shaping} without that shaping itself becoming a second, uncontrolled source of proxy/true divergence.

\subsection{Constrained Reinforcement Learning}
Constrained Markov decision processes \citep{altman1999constrained} formalize policy optimization subject to a secondary cost objective remaining below a threshold, typically enforced via a Lagrangian relaxation. \citet{achiam2017constrained} extend this to deep RL with a trust-region method with per-update constraint satisfaction guarantees. The present work uses the simpler, fixed-multiplier end of this family (Section~\ref{sec:constrained}): rather than tuning the Lagrange multiplier via dual ascent, we fix it and instead derive a problem-specific identity (Section~\ref{sec:theory}) that lets a hard cap on the cost signal translate directly into a deterministic bound on true-objective performance. We treat an implementation of the dual-ascent or hard-constrained variant, in the style of \citet{achiam2017constrained}, as natural future work rather than a contribution of the present paper.

\subsection{Positioning}
Our contribution sits at the intersection of these threads: we construct a specification-gaming environment in the spirit of \citet{krakovna2020specification} and \citet{skalse2022defining}, but with a controlled, by-design proxy/true gap rather than one discovered post hoc; we use the shaping guarantee of \citet{ng1999policy} to make the gap tractable for a standard deep Q-network \citep{mnih2015human}; and we apply a Lagrangian-penalized constraint in the tradition of \citet{altman1999constrained}, strengthened in this specific setting by an exact accounting identity that turns an empirical improvement into a provable guarantee.

\section{Method}
\label{sec:method}

\subsection{Environment: \textsc{CleaningGridWorld}}
\label{sec:env}

\textsc{CleaningGridWorld} is a $5 \times 5$ gridworld containing four dirt cells and one fixed disposal bin. Each cell is in one of four states: empty floor, dirt, dirt covered by a rug, or the disposal bin. The agent occupies a single cell and may additionally be \emph{carrying} one unit of vacuumed dirt; it cannot carry more than one unit at a time. The action space is discrete with six actions: four cardinal movements, \textsc{vacuum}, and \textsc{cover}. \textsc{vacuum} picks up dirt from the agent's current cell if the agent is not already carrying, or deposits carried dirt if the agent is standing on the bin. \textsc{cover} places a rug over dirt in the current cell.

This design yields two structurally distinct routes to the same sensor-visible outcome. A \textbf{proxy reward}, modeling a cheap automated sensor, counts cells that do not read as dirty:
\begin{equation}
	\label{eq:proxy}
	R^{\text{proxy}}_t = \left| \{(r,c) : \text{grid}_t(r,c) \in \{\text{empty}, \text{covered}\}\} \right|
\end{equation}
A \textbf{true reward} counts only genuine disposal events, independent of what the sensor would report:
\begin{equation}
	\label{eq:true}
	R^{\text{true}}_t = \mathbb{1}[\text{a disposal occurred at } t]
\end{equation}
Both covering and picking up dirt raise $R^{\text{proxy}}$ (Eq.~\ref{eq:proxy}), since both remove the dirt state from a cell; only completed delivery to the bin raises $R^{\text{true}}$ (Eq.~\ref{eq:true}). This asymmetry is the environment's constructed gap: an agent that observes only $R^{\text{proxy}}$ has no reward-level incentive to prefer the costlier delivery route over the cheaper concealment route, even though only the former satisfies the intended objective. We use a single fixed layout -- dirt, bin, and agent start positions sampled once and reused across all episodes -- so that training is a navigation problem on a known map rather than a layout-generalization problem; we return to this choice in Section~\ref{sec:limitations}.

\subsection{Agent}
\label{sec:agent}

The agent is a deep Q-network \citep{mnih2015human}, which is a feedforward network with two hidden layers of 128 units each and ReLU activations, mapping an encoded state to one Q-value per action. The state is encoded as the concatenation of a one-hot grid representation ($5 \times 5 \times 4 = 100$ dimensions), a one-hot agent position ($5 \times 5 = 25$ dimensions), and a binary carrying flag (1 dimension), yielding a 126-dimensional input. This architecture is intentionally minimal: the state and action spaces are small enough that a compact MLP with experience replay and a target network is sufficient to isolate the proxy/true gaming phenomenon without confounding it with architecture-driven learning difficulty.

\subsection{Potential-Based Reward Shaping}
\label{sec:shaping}

Only 4 of the environment's 25 cells are ever reward-relevant, and $\epsilon$-greedy exploration under this sparsity provides too little gradient signal for standard Q-learning to discover either route to reward within a practical training budget. We address this with potential-based reward shaping \citep{ng1999policy}:
\begin{equation}
	\label{eq:shaping}
	F(s_t, a_t, s_{t+1}) = \gamma\,\phi(s_{t+1}) - \phi(s_t)
\end{equation}
where $\phi(s)$ is the negative Manhattan distance to the agent's current objective: the nearest dirt cell if not carrying, or the bin followed by the nearest remaining dirt cell if carrying. $\phi$ is defined so that it varies continuously across the pickup-to-delivery transition, avoiding an artificial potential discontinuity at the exact moment the agent completes a delivery. \citet{ng1999policy} show that any reward of the form in Eq.~\ref{eq:shaping} leaves the optimal policy of the underlying MDP unchanged for any choice of $\phi$; shaping therefore accelerates discovery of an optimum without the possibility of shaping itself introducing a new, unintended proxy/true divergence.

\subsection{Constrained Reinforcement Learning Fix}
\label{sec:constrained}

We define a per-step cost signal targeting the two gaming mechanisms through which an unconstrained agent can raise $R^{\text{proxy}}$ without raising $R^{\text{true}}$ -- concealing dirt, and picking dirt up without ever delivering it:
\begin{equation}
	\label{eq:cost}
	\text{cost}_t = \mathbb{1}[\text{agent covers dirt at } t] + \mathbb{1}[t = T \text{ and agent is still carrying undelivered dirt}]
\end{equation}
The second term fires only at the final timestep $T$ of an episode, and only if the agent is carrying dirt it never delivered. The training reward used for the Q-learning update applies $\text{cost}_t$ as a fixed-multiplier Lagrangian penalty on top of the shaped proxy signal:
\begin{equation}
	\label{eq:train-reward}
	r^{\text{train}}_t = \Delta R^{\text{proxy}}_t + F(s_t, a_t, s_{t+1}) - \lambda \cdot \text{cost}_t
\end{equation}
where $\Delta R^{\text{proxy}}_t = R^{\text{proxy}}_t - R^{\text{proxy}}_{t-1}$ and $\lambda$ is fixed for the duration of training. Critically, the agent is never given $R^{\text{true}}$ as a training signal; it learns only from Eq.~\ref{eq:train-reward}, mirroring a deployment setting in which the true objective is precisely what is unavailable for direct supervision.

The concealment exploit is a single-action behavior that is trivially reachable by random exploration; the honest delivery route requires a multi-step chain (navigate to dirt, pick up, navigate to bin, deposit) that is comparatively unlikely to be discovered before $\epsilon$ decays. To address this exploration asymmetry, we generate a fixed set of honest-policy demonstration trajectories once, before training begins, and store them in a buffer that is never subsequently written to and therefore can never be evicted by the replay buffer's ring-buffer overwrite behavior. Each training batch is drawn as a mixture of ordinary experience and demonstration transitions at a fixed demonstration fraction. This separation is necessary: in an earlier version of this method, demonstrations were seeded into the same buffer used for ordinary experience and were silently overwritten once the total transitions exceeded the buffer's capacity, after which the agent's true-objective performance reverted toward the unconstrained baseline. Isolating demonstrations in a protected buffer removes this failure mode entirely.

\subsection{Hyperparameters}

Table~\ref{tab:hyperparams} lists the full configuration used for both the baseline and constrained agents, except where noted.

\begin{table}[h]
	\caption{Hyperparameters}
	\centering
	\begin{tabular}{ll}
		\toprule
		Parameter                        & Value                                    \\
		\midrule
		Grid size                        & $5 \times 5$                             \\
		Dirt cells                       & 4                                        \\
		Episode length                   & 40 steps                                 \\
		Episodes (baseline)              & 500                                      \\
		Episodes (constrained)           & 800                                      \\
		Hidden layer size                & 128 (two layers)                         \\
		Learning rate                    & $10^{-3}$ (Adam)                         \\
		Discount factor $\gamma$         & 0.95                                     \\
		$\epsilon$ schedule              & $1.0 \to 0.05$, linear over 300 episodes \\
		Batch size                       & 64                                       \\
		Target network update frequency  & every 10 episodes                        \\
		Replay buffer capacity           & 10{,}000                                 \\
		Shaping scale                    & 0.2                                      \\
		Lagrangian multiplier $\lambda$  & 2.0                                       \\
		Demonstration episodes           & 20                                       \\
		Demonstration fraction per batch & 0.5                                       \\
		\bottomrule
	\end{tabular}
	\label{tab:hyperparams}
\end{table}

\section{Theoretical Result}
\label{sec:theory}

\begin{proposition}[Exact accounting identity]
	\label{prop:identity}
	Let $\text{proxy}_{\text{total}} = \sum_{t=1}^{T} \Delta R^{\text{proxy}}_t$, $\text{true}_{\text{total}} = \sum_{t=1}^{T} R^{\text{true}}_t$, and $\text{cost}_{\text{total}} = \sum_{t=1}^{T} \text{cost}_t$, as defined in Eqs.~\ref{eq:proxy}--\ref{eq:cost}. Then for any policy and any episode of length $T$ in \textsc{CleaningGridWorld},
	\begin{equation}
		\label{eq:identity}
		\text{proxy}_{\text{total}} = \text{true}_{\text{total}} + \text{cost}_{\text{total}}.
	\end{equation}
\end{proposition}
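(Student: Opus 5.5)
The plan is a telescoping-and-bookkeeping argument. First, I would note that $\text{proxy}_{\text{total}} = \sum_{t=1}^{T} \Delta R^{\text{proxy}}_t = R^{\text{proxy}}_T - R^{\text{proxy}}_0$ telescopes. By Eq.~\ref{eq:proxy}, this equals the net number of cells that move from the state ``dirt'' into the set $\{\text{empty},\text{covered}\}$ during the episode. Second, I would show that this count is monotone. I would enumerate the transition rules of \textsc{CleaningGridWorld} and check that no action ever returns a cell to the dirt state, and that no action moves a cell between ``empty'' and ``covered''. The bin cell never enters the counted set, and depositing leaves the grid unchanged apart from the carry flag. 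So $\text{proxy}_{\text{total}}$ equals exactly the number of \emph{dirt-leaving events}. Each such event is either a \textsc{cover} applied to a dirt cell or a \textsc{vacuum} pickup from a dirt cell.

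Next, I would match each dirt-leaving event to exactly one unit of $\text{true}$ or $\text{cost}$. A \textsc{cover} on dirt contributes exactly $1$ to $\text{cost}_{\text{total}}$ through the first indicator of Eq.~\ref{eq:cost}, and it contributes nothing to $R^{\text{true}}$. For pickups, I would use the carrying constraint: the agent holds at most one unit and can pick up only when not carrying. The pickup and deposit times therefore strictly alternate, $p_1 < d_1 < p_2 < d_2 < \cdots$. Every pickup except possibly the last is followed by a deposit, which is a disposal event contributing $1$ to $\text{true}_{\text{total}}$ by Eq.~\ref{eq:true}. Conversely, every disposal is preceded by its own pickup, so pickups and disposals are in bijection. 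The final pickup is either delivered before $T$ or still carried at $T$. In the latter case, the second indicator in Eq.~\ref{eq:cost} fires exactly once. Summing gives $\#\text{pickups} = \text{true}_{\text{total}} + \mathbb{1}[\text{carrying at } T]$, and together with the cover count this yields Eq.~\ref{eq:identity}.

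The main obstacle is the edge cases of the action semantics rather than the counting itself. I would check explicitly that each of the following is a no-op for both the proxy count and the cost: \textsc{cover} on a non-dirt cell, \textsc{cover} while carrying, \textsc{vacuum} on a covered cell, \textsc{vacuum} on an empty cell, and \textsc{vacuum} while carrying but off the bin. The identity is exact only if none of these changes the grid's dirty count or triggers a cost or disposal. If any of these cases behaves differently, I would pin that convention down from the environment definition before concluding. I would also confirm that the agent starts each episode not carrying, so that no disposal can occur without a matching in-episode pickup.
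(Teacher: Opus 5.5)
Your proposal is correct and is essentially the paper's argument. Every unit of proxy gain is matched to exactly one unit of true reward or cost, according to whether the dirt is covered, delivered, or abandoned, and the one-unit carrying limit ensures the single end-of-episode indicator accounts for all abandonment.

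Your event-based version (telescoping plus monotonicity, then pickup/deposit alternation) is somewhat more careful than the paper's cell-based sketch. It automatically handles never-touched dirt, which the paper's claim that every unit is resolved in one of three ways glosses over. Your edge-case checks, notably that \textsc{vacuum} cannot lift dirt from a covered cell, are exactly the tacit assumptions behind the paper's ``mutually exclusive'' claim.
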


\begin{proof}[Proof sketch]
	Each unit of dirt is resolved, by the end of the episode, in exactly one of three mutually exclusive ways:
	\begin{itemize}
		\item \textbf{Covered.} Contributes $+1$ to $\text{proxy}_{\text{total}}$ (Eq.~\ref{eq:proxy}: the cell transitions to the covered state), $+0$ to $\text{true}_{\text{total}}$, and $+1$ to $\text{cost}_{\text{total}}$ by the first term of Eq.~\ref{eq:cost}.
		\item \textbf{Delivered.} Contributes $+1$ to $\text{proxy}_{\text{total}}$ at the moment of pickup (the cell empties), $+1$ to $\text{true}_{\text{total}}$ at the moment of delivery, and $+0$ to $\text{cost}_{\text{total}}$.
		\item \textbf{Abandoned.} Picked up but not delivered by episode end. Contributes $+1$ to $\text{proxy}_{\text{total}}$ at pickup, $+0$ to $\text{true}_{\text{total}}$, and $+1$ to $\text{cost}_{\text{total}}$ by the second term of Eq.~\ref{eq:cost}.
	\end{itemize}
	Since the agent carries at most one unit of dirt at a time, at most one unit is unresolved at episode end, so these three cases partition the dirt cells without double-counting. In each case the per-cell proxy contribution equals the sum of that cell's true and cost contributions. Summing over all dirt cells gives Eq.~\ref{eq:identity}.
\end{proof}

Proposition~\ref{prop:identity} shows that $\text{cost}_{\text{total}}$, as defined in Eq.~\ref{eq:cost}, is not merely correlated with the proxy/true gap; it measures the gap exactly. Penalizing cost in Eq.~\ref{eq:train-reward} is equivalent, in the exact accounting sense of Eq.~\ref{eq:identity}, to penalizing $\text{proxy}_{\text{total}} - \text{true}_{\text{total}}$ directly, despite the agent never observing $\text{true}_{\text{total}}$ during training.

\begin{corollary}[Deterministic bound under a hard cost constraint]
	\label{cor:bound}
	If training enforces $\text{cost}_{\text{total}} \leq K$ for a fixed $K$ -- for instance via early episode termination or action-masking once $K$ gaming events have occurred -- then
	\begin{equation}
		\label{eq:bound}
		\text{true}_{\text{total}} \geq \text{proxy}_{\text{total}} - K
	\end{equation}
	holds with certainty, for every episode, rather than in expectation.
\end{corollary}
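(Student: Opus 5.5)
The plan is to derive Corollary~\ref{cor:bound} directly from Proposition~\ref{prop:identity}, with the hard constraint supplying the only extra ingredient. Fix an arbitrary episode generated under the constrained training or execution regime. Because Eq.~\ref{eq:identity} holds for \emph{any} policy and \emph{any} episode of length $T$, it holds in particular for this episode. Rearranging gives $\text{true}_{\text{total}} = \text{proxy}_{\text{total}} - \text{cost}_{\text{total}}$. Under the hypothesis $\text{cost}_{\text{total}} \leq K$, substitution yields Eq.~\ref{eq:bound}. No expectation is taken anywhere, so the bound holds pathwise, for every realized trajectory rather than on average.

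The real work is to check that the enforcement mechanism does not break the hypotheses of Proposition~\ref{prop:identity}. I would handle the two mechanisms named in the statement separately.

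\emph{Action masking.} Forbidding \textsc{cover} after $K$ covering events only restricts the policy class. The environment dynamics and the definitions in Eqs.~\ref{eq:proxy}--\ref{eq:cost} are unchanged, and the identity is policy-independent, so it still applies. One subtlety is that masking cannot prevent the terminal ``still carrying'' cost. So I would read the hypothesis $\text{cost}_{\text{total}} \leq K$ as the assumed outcome of enforcement, not as something masking alone guarantees. Alternatively, one could budget $K-1$ cover events, leaving room for at most one terminal unit, using the at-most-one-carried fact from the proof of Proposition~\ref{prop:identity}.

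\emph{Early termination.} Here the episode length becomes a random stopping time $T' \leq T$, and the identity was stated for an episode of length $T$. I would argue that the proof of Proposition~\ref{prop:identity} never uses the value of $T$. It only needs the second cost term to fire at whatever step is final. So, provided the terminal-carry indicator in Eq.~\ref{eq:cost} is evaluated at the actual last step $T'$, the same three-way partition (covered, delivered, abandoned) goes through verbatim.

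The main obstacle is this early-termination case. One must make explicit that ``$t = T$'' in Eq.~\ref{eq:cost} refers to the realized final step. Otherwise a dirt unit picked up but cut off by termination would add $+1$ to proxy with no matching cost, and the identity would fail. If that reading is not accepted, I would restrict the corollary to the masking mechanism, or redefine the terminal cost at the stopping time, and note this as the precise condition under which the deterministic guarantee holds.
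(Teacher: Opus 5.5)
Your proposal is correct and matches the paper's argument: the paper rearranges Eq.~\ref{eq:identity} to $\text{true}_{\text{total}} = \text{proxy}_{\text{total}} - \text{cost}_{\text{total}}$ and substitutes $\text{cost}_{\text{total}} \leq K$ trajectory by trajectory. Your extra checks on the enforcement mechanisms are sound refinements the paper leaves implicit. In particular, reading $T$ in Eq.~\ref{eq:cost} as the realized final step is what keeps the identity valid under early termination. By the same reading, terminating right after the $K$-th cover while still carrying would yield $\text{cost}_{\text{total}} = K+1$. So treating $\text{cost}_{\text{total}} \leq K$ as the assumed outcome of enforcement, as you do, is the right reading for both mechanisms.
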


Corollary~\ref{cor:bound} follows immediately from rearranging Eq.~\ref{eq:identity}. It provides the hard-constrained variant of our method with a guarantee of a different character than the fixed-$\lambda$ Lagrangian penalty used in the present experiments (Eq.~\ref{eq:train-reward}): rather than encouraging low cost on average, a hard cap on $\text{cost}_{\text{total}}$ translates, via Proposition~\ref{prop:identity}, into a deterministic floor on true-objective performance relative to whatever proxy score the policy achieves. We verify Proposition~\ref{prop:identity} empirically against both trained agents in Section~\ref{sec:results-identity}, confirming that the identity holds exactly on the implemented system and not only in the idealized derivation above. The present experiments use the fixed-$\lambda$ relaxation rather than the hard-constrained variant of Corollary~\ref{cor:bound}; we discuss this as a direction for future work in Section~\ref{sec:conclusion}.

\section{Experiments and Results}
\label{sec:results}

\subsection{Sanity Check: Confirming the Proxy/True Gap}
\label{sec:results-sanity}

Before training any agent, we verified that the environment's constructed proxy/true gap was real and not an artifact of a single layout. We evaluated two hand-coded policies -- an \emph{exploiter}, which conceals any dirt it stands on, and an \emph{honest} policy, which always delivers dirt to the bin -- across 20 randomly generated layouts, running each for 40 steps per episode. Table~\ref{tab:sanity} reports the final proxy and true reward achieved by each policy, averaged across seeds.

\begin{table}[h]
	\caption{Hand-coded policy comparison across 20 random layouts (mean $\pm$ std)}
	\centering
	\begin{tabular}{lrr}
		\toprule
		Policy     & Final proxy reward & Final true reward \\
		\midrule
		Exploiter  & $24.0 \pm 0.0$      & $0.0 \pm 0.0$      \\
		Honest     & $23.9 \pm 0.4$      & $3.6 \pm 0.5$      \\
		\bottomrule
	\end{tabular}
	\label{tab:sanity}
\end{table}

\begin{figure}[h]
	\centering
	\includegraphics[width=0.8\linewidth]{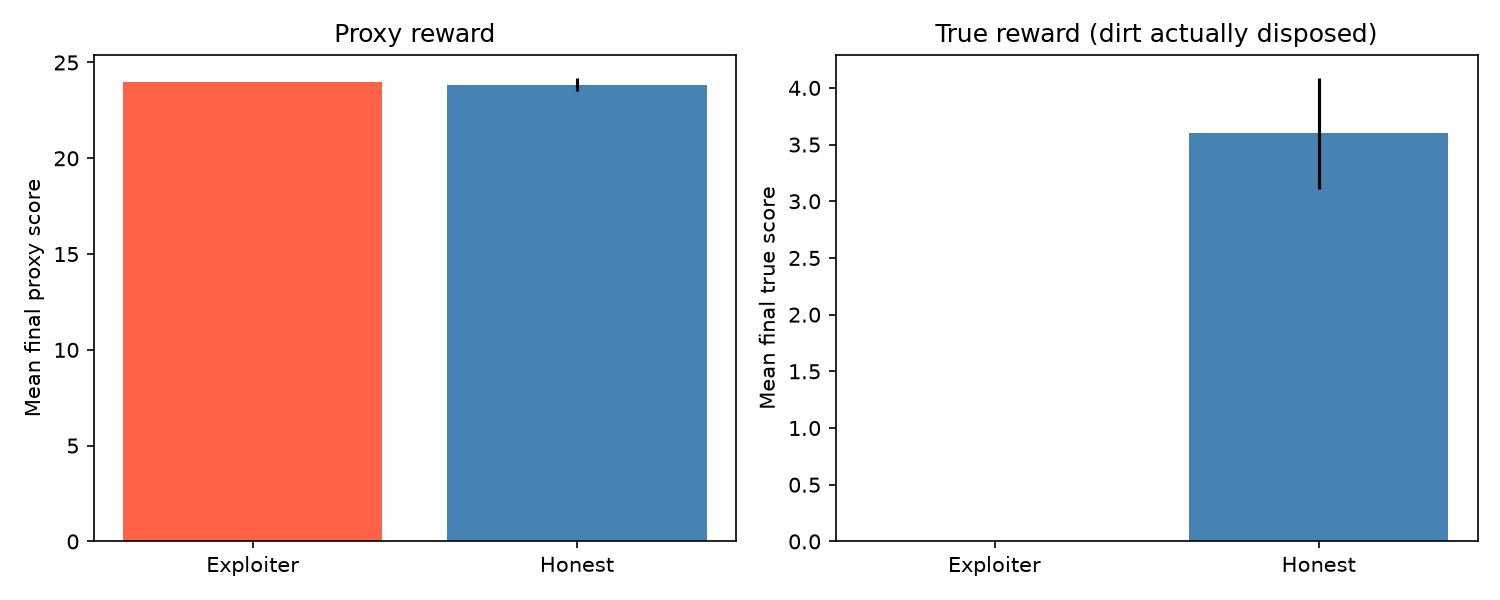}
	\caption{Mean final proxy reward (left) and true reward (right) for the exploiter and honest hand-coded policies, across 20 random layouts. Error bars show one standard deviation.}
	\label{fig:sanity}
\end{figure}

The exploiter policy reached the maximum possible proxy score with zero variance across seeds, while the true reward remained at exactly zero in every trial. The honest policy reached a statistically indistinguishable proxy score while achieving a substantially higher true reward. These two policies attain nearly identical proxy rewards through entirely different means, confirming that the proxy signal alone does not favor the intended behavior.

\subsection{Baseline: A Proxy-Only Agent Learns to Game}
\label{sec:results-baseline}

We trained the deep Q-network of Section~\ref{sec:agent} for 500 episodes using only the shaped proxy reward (Eq.~\ref{eq:train-reward} with $\lambda = 0$). Figure~\ref{fig:baseline-training} shows the true reward over the course of training; the true reward remained flat near zero throughout, with no sustained upward trend. The final greedy policy (no exploration) was evaluated via rollout: it reached a proxy reward of 23 and a true reward of 0, using 2 \textsc{cover} actions and achieving zero deliveries. This confirms that the agent discovered a proxy-gaming policy during training, consistent with the behavior of the hand-coded exploiter policy in Section~\ref{sec:results-sanity}, but without being directed toward concealment specifically.

\begin{figure}[h]
	\centering
	\includegraphics[width=0.48\linewidth]{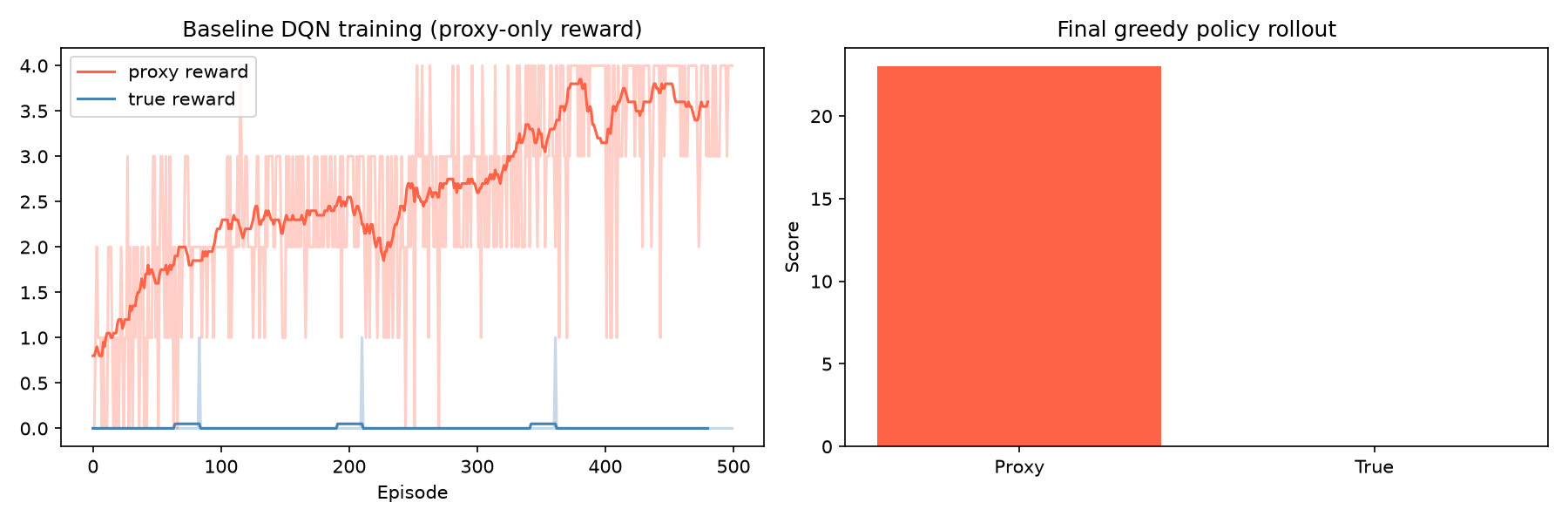}
	\caption{Baseline (proxy-only) DQN training. True reward over 500 episodes.}
	\label{fig:baseline-training}
\end{figure}

\subsection{Constrained Agent: Eliminating the Exploit}
\label{sec:results-constrained}

We trained the constrained agent in Section~\ref{sec:constrained} for 800 episodes using the full training reward in Eq.~\ref{eq:train-reward} with $\lambda = 2.0$. Table~\ref{tab:comparison} compares the final greedy policy rollouts of the baseline and constrained agents.

\begin{table}[h]
	\caption{Baseline vs.\ constrained agent, final greedy-policy rollout}
	\centering
	\begin{tabular}{lrr}
		\toprule
		Metric        & Baseline & Constrained \\
		\midrule
		Proxy reward  & 23       & 23          \\
		True reward   & 0        & 2           \\
		Cover actions & 2        & 0           \\
		\bottomrule
	\end{tabular}
	\label{tab:comparison}
\end{table}

Both agents reached an identical proxy reward of 23. The constrained agent's final policy contained zero \textsc{cover} actions, compared to 2 for the baseline, and achieved a true reward of 2, compared to 0 for the baseline. Figure~\ref{fig:comparison} shows the true reward over training for both agents side by side: the baseline remained flat near zero throughout, while the constrained agent's true reward rose over the course of training and did not return to zero.

\begin{figure}[h]
	\centering
	\includegraphics[width=\linewidth]{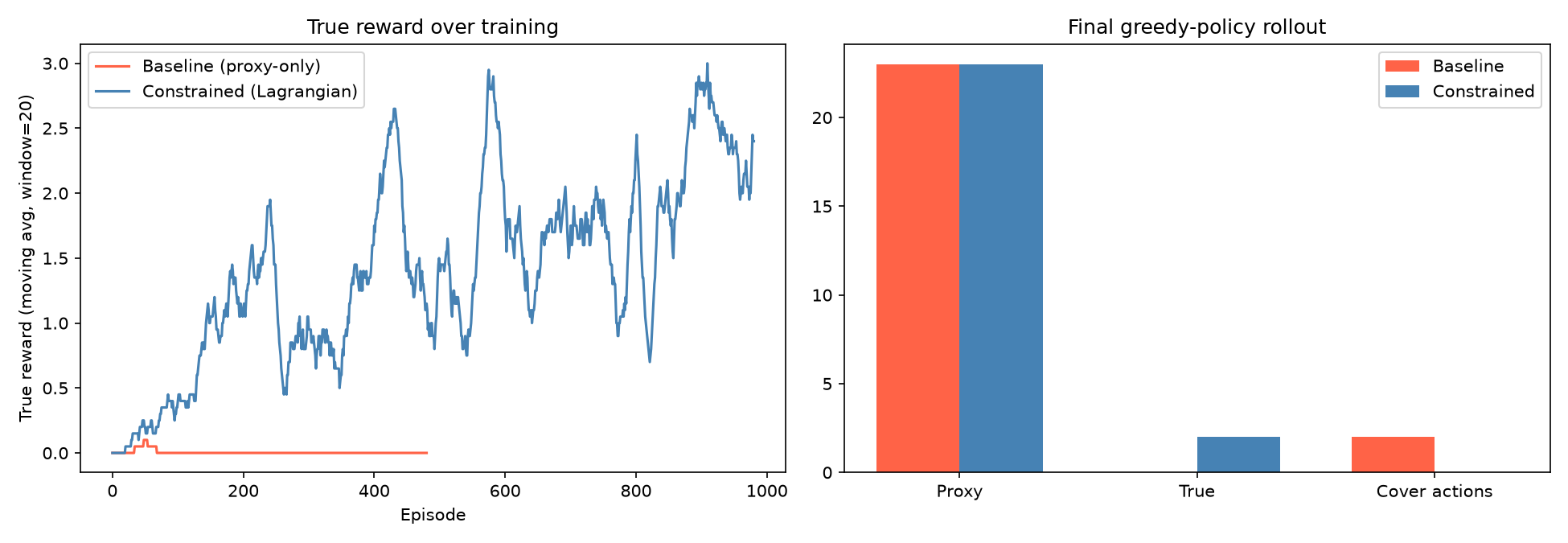}
	\caption{Left: true reward over training, baseline vs.\ constrained. Right: final greedy-policy rollout comparison (proxy reward, true reward, cover-action count).}
	\label{fig:comparison}
\end{figure}

The constrained agent's final policy did not reach the maximum possible true reward of 4; the remaining 2 units of dirt were left unresolved within the 40-step episode budget. We report and discuss this outcome in Section~\ref{sec:limitations}.

\subsection{Empirical Verification of the Accounting Identity}
\label{sec:results-identity}

We evaluated Proposition~\ref{prop:identity} directly against the final greedy policies of both trained agents, computing $\text{proxy}_{\text{total}}$, $\text{true}_{\text{total}}$, and $\text{cost}_{\text{total}}$ from the same rollouts reported in Table~\ref{tab:comparison}. For the baseline agent: $\text{proxy}_{\text{total}} = 2$, $\text{true}_{\text{total}} = 0$, $\text{cost}_{\text{total}} = 2$, and $\text{true}_{\text{total}} + \text{cost}_{\text{total}} = 2 = \text{proxy}_{\text{total}}$. For the constrained agent: $\text{proxy}_{\text{total}} = 3$, $\text{true}_{\text{total}} = 2$, $\text{cost}_{\text{total}} = 1$, and $\text{true}_{\text{total}} + \text{cost}_{\text{total}} = 3 = \text{proxy}_{\text{total}}$. The identity of Eq.~\ref{eq:identity} held exactly for both agents, confirming that Proposition~\ref{prop:identity} precisely describes the implemented system and not only the idealized derivation of Section~\ref{sec:theory}.

\section{Discussion and Limitations}
\label{sec:discussion}

The results in Section~\ref{sec:results} show a clear qualitative shift rather than a marginal improvement: the baseline agent's final policy contained no deliveries at all, while the constrained agent's final policy contained zero concealment actions. This is consistent with the mechanism established in Proposition~\ref{prop:identity} because the cost signal measures the proxy/true gap exactly rather than approximately, penalizing it in the training reward (Eq.~\ref{eq:train-reward}) removes the agent's incentive to resolve dirt through concealment or abandonment specifically, rather than discouraging dirt-resolving behavior in general. The partial recovery of true reward (2 of 4 possible units, Table~\ref{tab:comparison}) reflects this precisely: the constrained agent's policy no longer contains the exploit but has not yet converged to an efficient delivery route for every dirt cell within the fixed step budget, a distinction we return to below.

\subsection{Limitations}
\label{sec:limitations}

This study has three limitations.

First, the agent is trained and evaluated on a single fixed grid layout -- dirt, bin, and agent-start positions generated once and reused across all episodes -- rather than a distribution of randomly sampled layouts. This is a standard simplification for interpretable gridworld demonstrations, and it isolates the proxy/true gaming phenomenon from a separate, harder generalization problem. This also means that our results should be read as establishing that the gaming behavior and the constrained fix both occur \emph{on this layout}; we make no claim here about whether the fix transfers to unseen layouts without retraining, and treat this as the most immediate direction for follow-up work.

Second, the constrained agent's final policy, while free of concealment and abandonment behaviors entirely, had not converged to optimal navigation within the episode budget: two of the four dirt units remained unresolved, consistent with residual action oscillation near grid boundaries rather than any renewed incentive to game the proxy. This is a training efficiency gap, separate from and not evidence against the specification-gaming question this work targets -- the policy's qualitative behavior (no concealment, no abandonment) is the outcome our method is designed to produce, and it was produced; full navigational optimality was not a target of the constrained fix itself.

Third, the exact form of Proposition~\ref{prop:identity} depends on a structural property specific to this environment: every unit of dirt is resolved through exactly one of three mutually exclusive outcomes (covered, delivered, or abandoned), with no possibility of double-counting because the agent can carry at most one unit at a time. An environment without this clean partition -- for instance, one where a single action could simultaneously advance and undermine the true objective, or where the agent could act on multiple objective-relevant units concurrently -- would require a different argument, and the identity as stated here should not be assumed to generalize automatically to such settings. We view identifying the general conditions under which an exact (rather than approximate) proxy/true/cost identity exists as an open question raised but not answered by this work.

\section{Conclusion and Future Work}
\label{sec:conclusion}

We constructed \textsc{CleaningGridWorld}, a gridworld with a deliberately constructed, exploitable gap between a proxy reward and a true objective, and showed that a standard deep Q-network trained on the proxy alone converges to a policy that games it completely, reaching near-maximal proxy reward while the true reward remains at zero. We then introduced a Lagrangian-penalized constrained RL fix, with a cost signal targeting the specific gaming mechanisms the baseline agent discovered, and showed that it eliminates those behaviors entirely from the final policy while recovering substantial true-objective performance. Underlying this fix is an exact accounting identity, proved for this environment and verified empirically against both trained agents, showing that our cost signal does not merely correlate with the proxy/true gap but measures it exactly, giving the method a guarantee of a different character than a purely empirical improvement.

Two directions follow directly from this work. First, testing whether the constrained fix transfers to randomly sampled grid layouts, rather than the single fixed layout used here, would establish whether the result generalizes beyond one navigation problem. Second, implementing the hard-constrained variant of Corollary~\ref{cor:bound} directly -- via action masking or early termination once a fixed cost budget is reached -- would allow the deterministic bound on true-objective performance to be tested empirically against a trained system, rather than verified only as a consequence of Proposition~\ref{prop:identity} on the fixed-$\lambda$ relaxation studied here.

\section*{Code Availability}
\label{sec:code}

All code used to construct the environment, train both agents, and reproduce the figures and tables in this paper is publicly available at \url{https://github.com/hildaadwubiosei/Specification-Gaming}.

\bibliographystyle{unsrtnat}
\bibliography{references}

\end{document}